\newtheorem{theorem}{Theorem}[section]
\newtheorem{lemma}[theorem]{Lemma}
\definecolor{darkred}{RGB}{139,0,0}
\title{Diagonal Batching Unlocks Parallelism in Recurrent Memory Transformers for Long Contexts}
\author{
Danil Sivtsov\textsuperscript{1,2} \quad
Ivan Rodkin\textsuperscript{3,4} \quad
Gleb Kuzmin\textsuperscript{1,5} \quad
Yuri Kuratov\textsuperscript{1,3} \quad
Ivan Oseledets\textsuperscript{1,2} \\
\textsuperscript{1}AIRI, Moscow, Russia \\
\textsuperscript{2}Skoltech, Moscow, Russia \\
\textsuperscript{3}Neural Networks and Deep Learning Lab, MIPT, Dolgoprudny, Russia \\
\textsuperscript{4}MBZUAI, Abu Dhabi, UAE \\
\textsuperscript{5}FRC CSC RAS, Moscow, Russia \\
}
\begin{document}

\maketitle

\begin{abstract}
Transformer models struggle with long-context inference due to their quadratic time and linear memory complexity. Recurrent Memory Transformers (RMTs) offer a solution by reducing the asymptotic cost to linear time and constant memory usage. However, their memory update mechanism leads to sequential execution, causing a performance bottleneck.

We introduce Diagonal Batching, a scheduling scheme that unlocks parallelism across segments in RMTs while preserving exact recurrence. This approach eliminates the sequential constraint, enabling efficient GPU inference even for single long-context inputs without complex batching and pipelining techniques. Because the technique is purely a run-time computation reordering, existing RMT models adopt it with no retraining.

Applied to a LLaMA-1B ARMT model, Diagonal Batching yields a 3.3x speedup over standard full-attention LLaMA-1B and a 1.8x speedup over the sequential RMT implementation on 131,072-token sequences.
% We show that even greater speedups can be achieved if shorter segments are used.
By removing sequential bottleneck, Diagonal Batching reduces inference cost and latency, thereby strengthening RMTs as a practical solution for real-world, long-context applications.
\end{abstract}

\begin{figure}[H]
  \centering
  \includegraphics[width=0.49\textwidth]{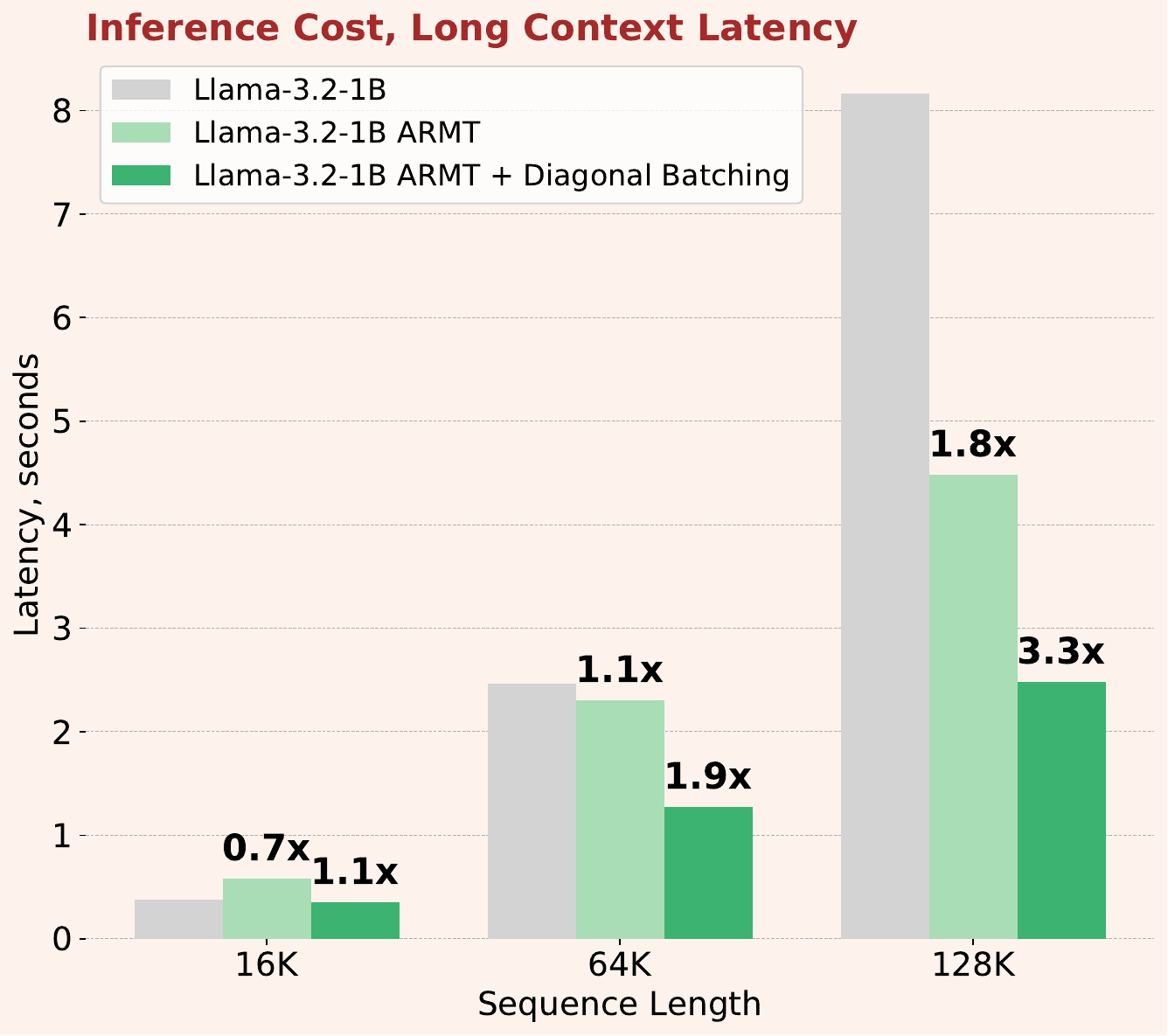}
  \hfill
  \includegraphics[width=0.49\textwidth]{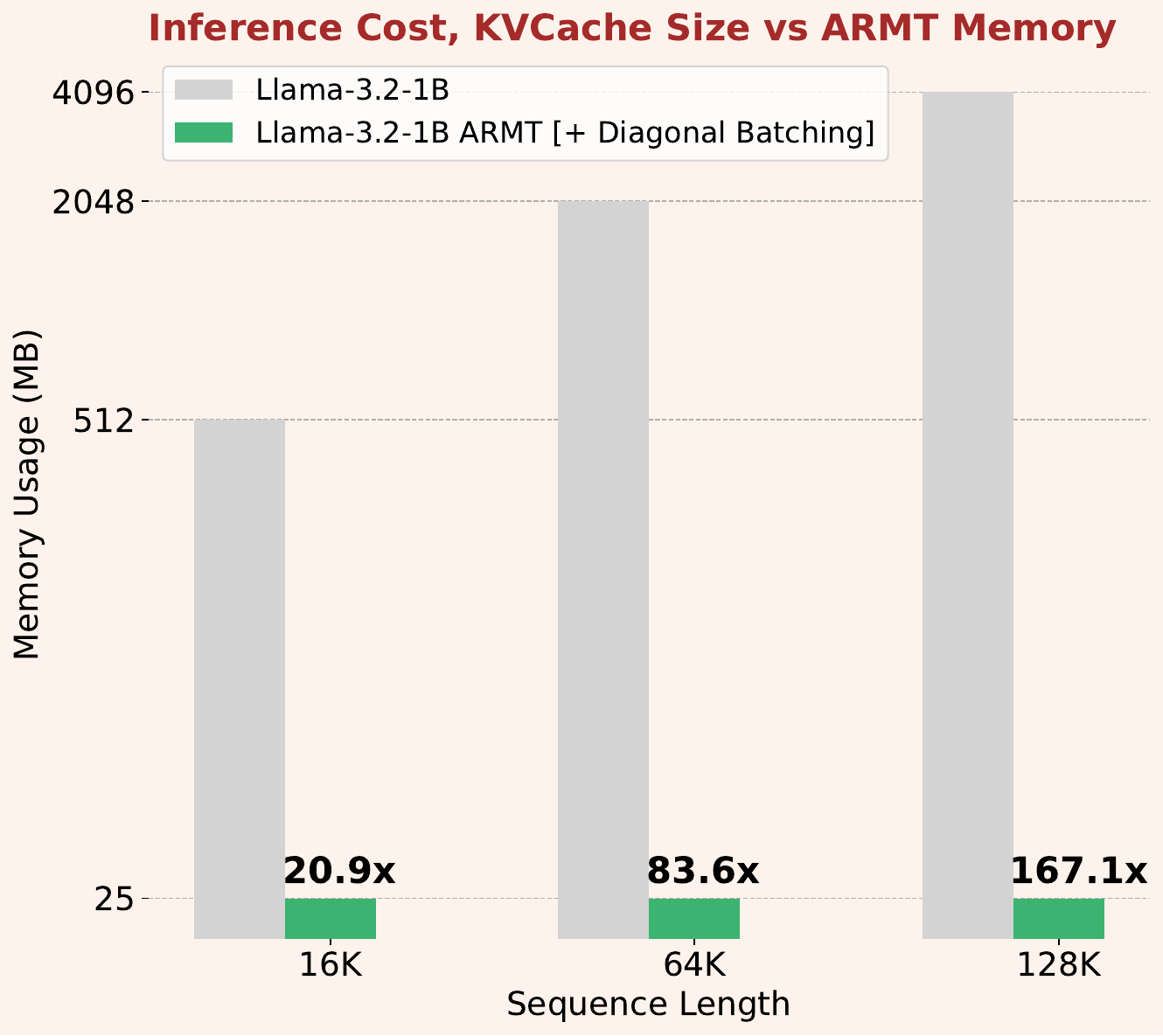}
  \caption{\textbf{Diagonal Batching enables the Recurrent Memory Transformers (ARMT) to process 128k tokens sequences 3.3x faster than the LLama-3.2-1B model, with 167.1x memory savings.} These results were obtained using an A100 GPU, and the segment size for the ARMT was set to 1,024 tokens. }
\end{figure}

%%%%%%%%%%%%%%%%%%%%%%%%%%%%%%%%%%%%%%%%%%%%%%%%%%%%%%%%%%%%
\section{Introduction}
\label{sec:introduction}
Transformer-based language models have not only revolutionized natural language processing (NLP)~\citep{vaswani2017attention,devlin2019bert,radford2019gpt2}, but also catalyzed the development of intelligent agents that can solve complex, multi-step problems in various domains by scaling up to large language models (LLMs)~\citep{openai2023gpt4,reid2024gemini, dubey2024llama3}. However, these transformer-based models have quadratic time complexity and a linear memory footprint with respect to the length of the input sequence. Consequently, real-world applications are limited by the context window size of standard transformers that can fit within hardware constraints.

From an engineering perspective, numerous optimizations have been proposed to improve attention efficiency and manage GPU memory more effectively.  Optimized attention kernels, such as FlashAttention~\citep{dao2022flashattention,dao2023flashattention2} and the xFormers library~\citep{xFormers2022} focus on reducing memory access overhead and maximizing throughput. Memory-saving attention modifications like Multi-Query Attention (MQA)~\citep{shazeer2019mqa}, Grouped Query Attention (GQA)~\citep{ainslie2023gqa}, and Multi-head Latent Attention (MLA)~\citep{liu2024deepseekv2} lower GPU RAM usage by sharing and optimizing KV-cache. For distributed long-context training, methods like Ring Attention~\citep{ringattention} and Microsoft DeepSpeed's Ulysses~\citep{jacobs2023deepspeedulysses} partition sequence data across multiple devices to scale beyond single-GPU memory limits.

Along with these engineering optimizations, alternative architectures to the standard Transformer have been explored. Recently, linear recurrent models, such as S4~\citep{gu2021s4}, RWKV~\citep{peng-etal-2023-rwkv}, RetNet~\citep{sun2023retentive}, and Mamba~\citep{mamba,dao2024mamba2} have replaced the softmax attention with alternative read-write operations. These models offer efficient parallel training, like transformers, and require constant memory during inference, like RNNs. However, these approaches often suffer from reduced memory capacity~\citep{jelassi2024repeat} and decreased accuracy in read-write operations~\citep{rodkin2024associative}.
% \subsection{Linear transformer transformers architectures}
% To overcome limitations of full attention a lot of linear approaches have been proposed (RWKV~\cite{peng-etal-2023-rwkv}, Mamba~\cite{mamba}, RetNet~\cite{sun2023retentive}). All of these methods replace softmax attention with other memory writing and reading mechanisms. Being much more scalable and efficient these methods, however, have much less memory capacity~\cite{jelassi2024repeat} and accuracy of read-write operations~\cite{rodkin2024associative}. 
% The efficiency superiority of these models explained not only by linear complexity, but also by the efficient parallel training. In similar manner to autoregressive transformers, state-space models have both inference and training mode, which allows them to be both efficiently trained and inferenced. 
Furthermore, both state-space models and Transformers face theoretical limits, such as the $\text{TC}^0$ complexity bound on the class of functions computable in a single forward pass~\citep{merrill2024illusion,strobl2024formal}, constraining their expressivity despite massive parallelism.

% This parallelization, however, does not come without cost either. Recent findings show that the majority of modern state-space models are theoretically limited with $\text{TC}^0$ complexity of tasks they can perform in one forward pass~\citep{merrill2024illusion}.
% Moreover, transformer architecture also pays this price for parallel training (tutorial acl about TC0).

% \subsection{Memory augmented transformers architectures}

Memory-augmented models~\citep{weston2014memory,sukhbaatar2015endtoend}, especially memory-augmented transformers with segment-level recurrence~\citep{dai2019transformerxl,rae2019compressive,bulatov2022recurrent,hutchins2022blockrecurrent} offer an alternative approach by compressing history into fixed-size memory states and propagating them across segments. In Recurrent Memory Transformers (RMT)~\citep{bulatov2022recurrent}, special memory tokens carry state between segments, and each Transformer block acts as a recurrent cell. This approach reduces inference complexity to linear time and constant memory, supporting arbitrarily long contexts~\citep{bulatov2023scaling}. However, the recurrent nature of RMT makes it not fully parallelizable; all subsequent layers have recurrent dependencies, and all segments must be processed sequentially.

% To overcome the above-mentioned issues of state-space models and transformers recurrent memory transformers have been proposed (RMT~\cite{bulatov2022recurrent}, ARMT~\cite{rodkin2024associative}). These models augment the vanilla transformer with segment-level recurrence. However, not fully parallelizable, these models can perform arbitrary complex reasoning over arbitrary long sequences that exceed the $\text{TC}^0$ limitation of transformers and SSMs.

% Despite being theoretically very promising solution these models however suffer from the memory overhead and inability to process full long context in parallel: segments have to be processed one after the other, while parallelization is only available within the segment~\cite{rodkin2024associative}.

Parallel Recurrent Memory Transformers (PRMTs)~\citep{rodkin2024associative} are a broader class of architectures in which each layer maintains its own memory state. PRMTs localize recurrence within layers and eliminate all inter-layer memory flow. The Associative Recurrent Memory Transformer (ARMT)~\citep{rodkin2024associative} belongs to this family and demonstrates exceptional scalability. It maintains high quality on sequences of up to 50 million tokens, which is far beyond the capacity of RMT and Mamba~\citep{rodkin2024associative,babilong}. Models such as RWKV, Mamba, and other linear-recurrent architectures can also be considered members of the PRMT family due to their layer-level memory design. In practice, however, these methods only exploit parallelism within individual segments. This parallelism is limited by RAM and compute bounds. Therefore, when processing extremely long sequences, these methods fall back to processing sequential segments, or even to token-level recurrence. This leaves true inter-segment parallelism unaddressed.

% In this work, we propose the novel technique, \emph{Diagonal Batching}, which allows recurrent transformers with layer-level memory (PRMTs, e.g., ARMT), to be parallelized even between the segments \textbf{without losing the ability to perform arbitrary complex reasoning in their forward pass}.
In this work, we introduce \emph{Diagonal Batching}, a scheduling scheme that unlocks inter-segment parallelism in PRMTs inference without altering their exact recurrence. By reorganizing the 2D grid of layer and segment computations into independent "diagonals" our method enables concurrent execution of up to N\_Layers operations per GPU kernel launch. Diagonal Batching fully encapsulates transformer block computations across segments, thus \emph{eliminating the layer- and segment-level synchronization barriers} present in previous RMT implementations.

We implement diagonal batching in the ARMT framework and evaluate its performance on a LLaMA-1B, 3B, and 8B models with sequence lengths up to 131{,}072 tokens on an NVIDIA A100/H100 GPUs. Our experiments demonstrate a $3.3\times$ speedup over standard full-attention inference and a $1.8\times$ improvement relative to a sequential ARMT baseline for 1B models. These results demonstrate that diagonal batching is a practical solution for exact, linear-time inference on extremely long contexts. Diagonal Batching code and experiments are publicly available.\footnote{\href{https://github.com/svtdanny/diagonal-batching}{github.com/svtdanny/diagonal-batching}}

Our contributions are:
\begin{itemize}
    \item We identify the key bottlenecks in existing implementations of RMTs and PRMTs, that limit efficient long‑context inference.
    \item We introduce a novel \emph{Diagonal Batching} technique that maximizes GPU utilization, preserves exact recurrence, and efficiently handles recurrent dependencies in PRMTs, enabling practical parallel execution.
    \item We empirically demonstrate that our diagonal batching method allows RMTs to achieve long-context scaling performance matching to the batch size scaling of their underlying transformer architectures.
    \item Our approach utilizes GPU with one long context request at a time, simplifying load balancing for production deployment.
\end{itemize}

%%%%%%%%%%%%%%%%%%%%%%%%%%%%%%%%%%%%%%%%%%%%%%%%%%%%%%%%%%%%

%%%%%%%%%%%%%%%%%%%%%%%%%%%%%%%%%%%%%%%%%%%%%%%%%%%%%%%%%%%%
\section{Background}
\label{sec:background}

\subsection{Recurrent Memory Transformers}
\paragraph{Recurrent Memory Transformer} (RMT) extends standard Transformer architectures by introducing segment-level recurrence. Specifically, the hidden representations corresponding to a segment $s$ are conditioned on a recurrent state $M$—referred to as the \textit{memory}—propagated from the previous segment $s-1$.

In the original RMT formulation, the memory state is implemented as a sequence of embeddings (\autoref{fig:diagonal_visualization}, left). The memory update mechanism can be formally expressed as:

\begin{gather}
[\_, \_, M_{s}] = \text{Transformer}([M_{s-1}, H_{s-1}, M_{s-1}]),
\end{gather}

where $M_s$ denotes the memory state associated with segment $s$, and $H_{s-1}$ represents the input embeddings from segment $s-1$. The square brackets indicate concatenation of the input sequences.

\paragraph{Associative Recurrent Memory Transformer} (ARMT) introduces a parallel memory mechanism designed to support a hierarchical memory structure. Unlike the original RMT, ARMT maintains distinct memory states across different layers. This design facilitates a more expressive memory representation by allowing each layer to store and update its own memory.

The memory update rule in ARMT is formulated as follows:

\begin{gather}
[\_, M_{s}^l] = \text{TransformerLayer}(\text{AssociativeLayer}([H_{s-1}^{l-1}, M_{s}^{l-1}]))
\\
k_{i},v_{i} =W_K m_{i},W_V m_{i}; \quad \beta_{i} = \sigma(W_\beta m_i); \quad
A_0^l = \vec{0}; \quad  z_0^l = \vec{0};\\
% \end{align}
% \begin{align}
\overline{v}_i = \frac{A_{s-1}^l \phi(k_i)}{(z_{s-1})^T \phi(k_i)}; \quad \gamma_i = 1 - \frac{(z_{s-1})^T\phi(k_i)}{\|\phi(k_i)\|^2}; \\
A_s^l = A_{s-1}^l + \sum_i \beta_i (v_i - \overline{v}_i) \otimes \phi(k_i); \quad z^l_{s} = z^l_{s-1} + \sum_i\gamma_i \phi (k_i).\\
\text{AssociativeLayer}(x_i) = \frac{A_{s-1}^l \phi(W_Q x_i)}{(z^l_{s-1})^T \phi(W_Q x_i)},
\end{gather}
where $m_i$ is the vector from $M_s^l$, $A_s^l \in \mathbb{R}^{d_{\text{model}} \times 6d_{\text{mem}}}$, $z_{s}^l \in \mathbb{R}^{6d_{\text{mem}}}$, $\phi$ is the untrained nonlinearity DPFP-3 \citep{schlag2021lineartransformerssecretlyfast}, $x_i$. is the vector from $[H_{s-1}^{l-1}, M_{s}^{l-1}]$.

This mechanism in fact implements quasi-linear attention with delta-rule for segment-level recurrence.

\begin{figure}[t]
    \centering
      \includegraphics[width=\linewidth]{./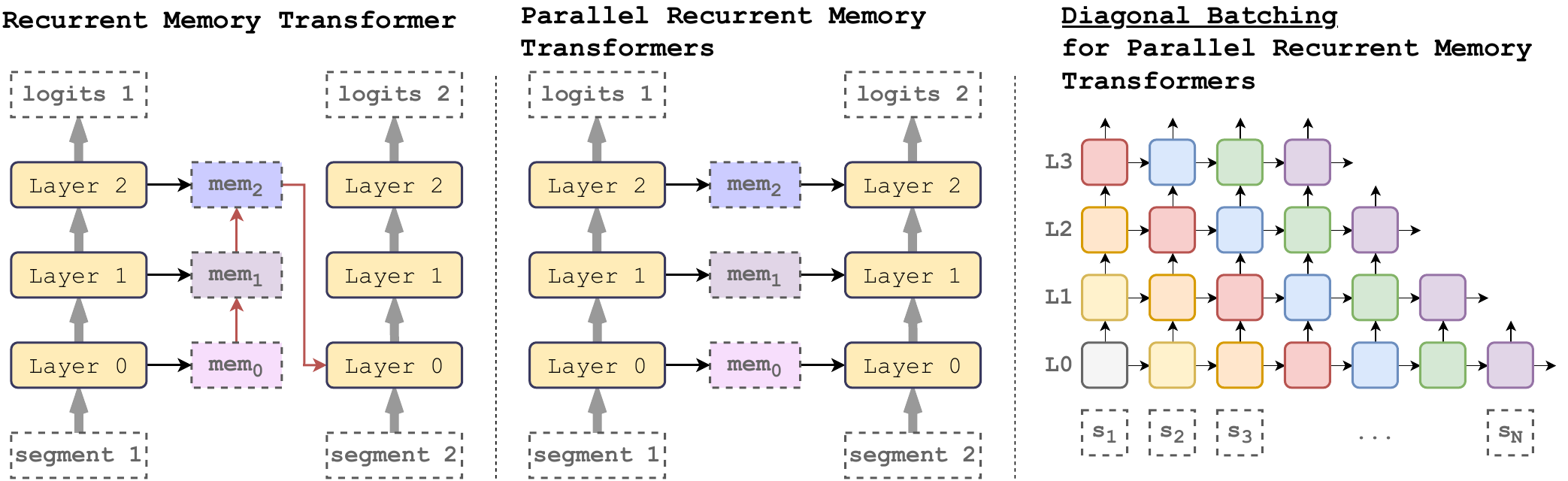}

    \caption{\textbf{Unlocking Parallelism in Recurrent Memory Transformers (RMT) with Diagonal Batching.} \textbf{Left:} Standard RMT splits long sequences and processes segments sequentially. Each layer updates a memory state ($\mathrm{mem}_0$, $\mathrm{mem}_1$, $\dots$) and the final-layer memory state is fed as input to the next segment; red arrows highlight the recurrent dependencies that force strictly sequential execution. \textbf{Center:} Parallel RMT generalizes a family of models with \emph{layer-level memory}. Each layer maintains its own memory state and passes it horizontally to the same layer in the next segment. This eliminates inter-layer memory flow, yet still requires processing segments in order within each layer, thereby creating layer-wise recurrence. \textbf{Right:} Diagonal Batching rearranges the 2D grid of layers (rows) and segments (columns) into independent "diagonals" (same colored blocks). This allows all operations on one diagonal (up to N\_Layers) to execute concurrently on the GPU, thus eliminating the sequential bottleneck while preserving all layer-level recurrence.}
    \label{fig:diagonal_visualization}
    % https://drive.google.com/file/d/1UfUNs_MnCVv4rFGfAzNm3g1OWPyUWgfq/view?usp=sharing
  \end{figure}

\subsection{Layer-level Recurrent Models}

Our method is primarily applicable to layer-level recurrent architectures, wherein the output of each segment (timestep) depends solely on the input and output of the preceding segment (timestep) within the same layer. We broadly refer to models that satisfy this assumption as Parallel Recurrent Memory Transformers (PRMTs, ~\Cref{fig:diagonal_visualization}, center): Associative Recurrent Memory Transformer (ARMT)~\citep{rodkin2024associative}, RWKV~\citep{peng-etal-2023-rwkv}, Mamba~\citep{mamba,dao2024mamba2}, and other linear-recurrent models~\citep{yangparallelizing}.

In ARMT, each layer $l$ has its own memory state that consists of associative matrix $A^{l}$. Memory state is updated by special associative block that takes as input outputs of the transformer layer $H_{t-1}^{l}$ on previous segment $t-1$ and memory update is defined as $A^{l}_{t} = \operatorname{AssociativeBlock}(A^{l}_{t-1}, H_{t-1}^{l})$. Inside the Associative Block, $A^{l}_{t}$ is updated by delta rule, in a simplified form: $A^{l}_{t} = A^{l}_{t-1} + v^l_t \otimes k^l_t$, where $v^l_t$ and $k^l_t$ are obtained by linear transformations of $H_{t-1}^{l}$. Each memory update in each layer is made once per segment.

This per-layer memory allows us to optimize the scheduling of which segments can be computed in parallel and at which layers.

There also exists a class of models that do not satisfy these assumptions. For instance, in RMT~\citep{bulatov2022recurrent}, the output of a given layer at segment $t$ additionally depends on the output of the final layer from the previous segment (\autoref{fig:diagonal_visualization}, left).

\subsection{Existing inference optimizations techniques for transformer models}

Several techniques are proposed to speed up the inference of transformer models, such as FlashAttention~\cite{dao2022flashattention,dao2023flashattention2}, speculative decoding~\cite{xia-etal-2023-speculative}, quantization techniques~\cite{frantar-gptq,lin2024awq}, and many others.

Therefore, any new approach should be compatible with these optimizations to be useful in practice. Diagonal Batching is independent of these methods and integrates with them seamlessly. It employs FlashAttention to group segments and achieve highly efficient attention computation.

\subsection{Hardware utilization}

Effectiveness of individual operations often analyzed via the roofline model, which characterizes the performance limits of hardware based on computational intensity and memory bandwidth \cite{williams2009roofline}. Transformer architecture mostly consists of matrix multiplication - compute bound operation. Matrix multiplication's computational intensity don't depends on batch size. However, the total achievable floating-point operations per second (FLOPS) improves significantly, as larger batch sizes enable better parallel workload distribution across GPU cores, optimizing hardware utilization \cite{dao2022flashattention}.

Despite these benefits, large batch size introduces significant memory demand. It mostly comes from intermediate activations computations and storing output logits, which scales linearly with batch size and sequence length. This limits practical usage of batching, as large language transformers often use almost all available GPU memory.

%%%%%%%%%%%%%%%%%%%%%%%%%%%%%%%%%%%%%%%%%%%%%%%%%%%%%%%%%%%%
% \newpage
\section{Diagonal Batching method}
\label{sec:diag_method}

\subsection{Intuition and dependency graph}

In the naive approach, we must perform many forward operations 
(\texttt{n\_segments} $\times$ \texttt{n\_layers}) using inputs of shape 
(\texttt{segment\_size}, \texttt{hidden\_size}).

Due to parallel memory usage, each \texttt{(segment, layer)} pair only depends on 
the preceding pairs: \texttt{(segment, layer-1)} and \texttt{(segment-1, layer)}.

Given this dependency, all pairs where \texttt{segment + layer = i} can be computed 
in parallel during the $i$-th iteration. Each iteration can be visualized as a diagonal in the forward-pass computation graph, as shown in \Cref{fig:diagonal_visualization}, right.

If the execution is not compute-bound, this diagonal execution approach can yield significant speedup.
Note that this property holds only for parallel memory models. In recursive memory 
models, each \texttt{(segment, layer)} depends on all previous 
\texttt{(segment-k, layer-n)} pairs, making diagonal batching not applicable.

\subsection{Batching}

\begin{figure}[t]
    \centering
    \begin{subfigure}[c]{0.3\textwidth}
      \centering
      \includegraphics[width=\textwidth]{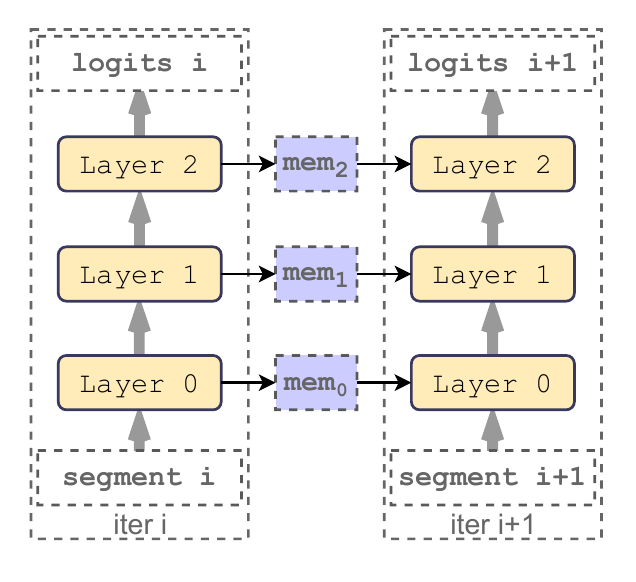}
      \caption{Baseline compute scheme.}
    \end{subfigure}%
    \begin{subfigure}[c]{0.7\textwidth}
        \centering
        \includegraphics[width=\textwidth]{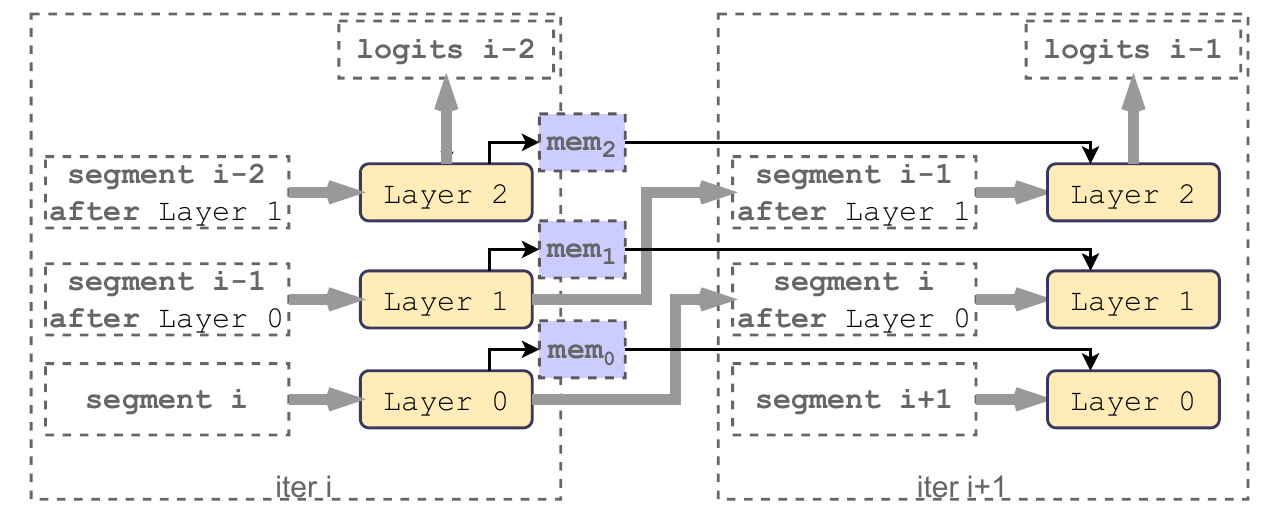}
        \caption{Diagonal Batching: grouped compute scheme.}
    \end{subfigure}%

    \caption{Baseline compute schedule in PRMTs leads to n\_layers x n\_segments sequential operations. Diagonal Batching reduces this value to n\_layers + n\_segments by grouped computations.}
  \label{fig:scheme_sequential_vs_grouped}
  \end{figure}

Simplified description of the algorithm is given for ARMT in Algorithm~\ref{alg:fastgroupedarmt}. 
For parallel RMT, the algorithm is the same, but without memory association and update.

\begin{algorithm}[t]
    \caption{\textsc{Grouped ARMT Execution}}
    \begin{algorithmic}[1]
    \Require input sequence $\mathcal{I}$, number of layers $L$, grouped layer $\mathcal{G}$

    \State \Call{ZeroGroupedMemory}{$\mathcal{M}$}
    \State $\textit{segments} \gets$ \Call{Segment}{$\mathcal{G}$, $\mathcal{I}$} \Comment{token ids to segments with memory tokens}

    \State $\textit{GInput} \gets []$,\; $\textit{Out} \gets []$
    \For{$i = 0$ \textbf{to} $\,L + |\textit{segments}| - 1$}
        \If{$i < |\textit{segments}|$}               
            \State \textbf{prepend} $\textit{segments}[i]$ to $\textit{GInput}$        \Comment{ingest new segment}
        \EndIf

        \State $X \gets \Call{Stack}{\textit{GInput}}$

        \If{$i>0$}
            \State $X_{0{:}|X|-1} \gets$ \Call{Associate}{$\mathcal{G},X_{0{:}|X|-1}$} \Comment{memory association operation between consecutive segments}
        \EndIf

        \State $Y \gets \Call{GroupedForward}{\mathcal{G},X}$ \Comment{multi-layer grouped call}
        \State \Call{UpdateMem}{$\mathcal{G}, Y_{:,-num\_mem\_tokens:}$} \Comment{memory update for next segment}
        \State $\textit{GInput} \gets$ list of segments in $Y$

        \If{$i \ge L-1$}                              
            \State $O \gets \textit{GInput}.\Call{PopLast}{}$ \Comment{segment went through all layers}
            \State \textbf{append} $O$ to $\textit{Out}$
        \EndIf
    \EndFor

    \State \Return $\Call{Concat}{\textit{Out}}$ \Comment{final logits}
    \end{algorithmic}
    \label{alg:fastgroupedarmt}
    \end{algorithm}

% \subsubsection{Number of layer computations}

\begin{lemma}\label{lem:diagonal_grouping_thr}
    Diagonal Batching completes the DAG in the minimum possible number of
    groups, \(N_{\mathrm{segments}}+N_{\mathrm{layers}}-1\), and schedules each node
    \((i,j)\) in its earliest feasible group \(i+j\).
\end{lemma}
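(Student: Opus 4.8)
The plan is to reason purely about the dependency DAG induced by the ``parallel memory'' property and to show that one \textsc{GroupedForward} call in \Cref{alg:fastgroupedarmt} executes exactly one anti-diagonal of the layer$\times$segment grid. Concretely, introduce nodes $(i,j)$ for $0 \le i < N_{\mathrm{segments}}$ (segment) and $0 \le j < N_{\mathrm{layers}}$ (layer), with directed edges $(i-1,j)\to(i,j)$ (the layer-level memory recurrence, \Cref{fig:diagonal_visualization}, center) and $(i,j-1)\to(i,j)$ (ordinary layer stacking within a segment). Call a \emph{group schedule} any map $g$ from nodes to $\{0,1,2,\dots\}$ with $g(u)<g(v)$ whenever $u\to v$; the number of groups it uses is $\max_v g(v)+1$, and each level set $g^{-1}(t)$ is one kernel launch. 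A one-line check on the two edge types shows every $g^{-1}(t)$ is an antichain (if $i+j=i'+j'$ and the nodes differ, they are incomparable, since any path strictly increases both coordinate-wise and by one in the sum per step), so this abstraction is faithful: nodes sharing a group are genuinely independent and may be stacked.

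Next I would establish the lower bound by a critical-path argument. Along any directed path $v_0\to\cdots\to v_k$ the values $g(v_0)<\cdots<g(v_k)$ are strictly increasing, so any valid schedule uses at least as many groups as there are nodes on the longest path. The path
$(0,0)\to(1,0)\to\cdots\to(N_{\mathrm{segments}}-1,0)\to(N_{\mathrm{segments}}-1,1)\to\cdots\to(N_{\mathrm{segments}}-1,N_{\mathrm{layers}}-1)$
visits $N_{\mathrm{segments}}+N_{\mathrm{layers}}-1$ nodes, and since every edge raises $i+j$ by exactly one while $i+j\le N_{\mathrm{segments}}+N_{\mathrm{layers}}-2$ throughout, this is also a longest path; hence $N_{\mathrm{segments}}+N_{\mathrm{layers}}-1$ groups are necessary.

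I would then exhibit the diagonal schedule $g^\star(i,j)=i+j$ and check it is both valid and optimal, and that it realizes the earliest-feasible-group claim. Validity is immediate: $g^\star(i-1,j)=g^\star(i,j-1)=i+j-1<i+j$. Its maximum value is $(N_{\mathrm{segments}}-1)+(N_{\mathrm{layers}}-1)$ and every intermediate value $t$ is attained (e.g.\ by $(t,0)$ for small $t$ and $(N_{\mathrm{segments}}-1,\,t-N_{\mathrm{segments}}+1)$ otherwise), so $g^\star$ uses exactly $N_{\mathrm{segments}}+N_{\mathrm{layers}}-1$ groups and meets the lower bound. For the earliest-group statement, an induction on $i+j$ shows that in \emph{any} valid schedule $g(i,j)\ge i+j$: the base case $g(0,0)\ge 0$ is trivial, and in general $g(i,j) > \max\{g(i-1,j),\,g(i,j-1)\} \ge \max\{i-1+j,\;i+j-1\} = i+j-1$, so $g(i,j)\ge i+j$; since $g^\star$ attains this bound node-by-node, it places each node in its earliest feasible group.

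Finally I would connect this to \Cref{alg:fastgroupedarmt}: a short induction on the loop counter shows that at iteration $t$ the buffer \textit{GInput} holds exactly the segments $s$ with $\max(0,t-L+1)\le s\le\min(t,N_{\mathrm{segments}}-1)$, segment $s$ having passed through $t-s$ layers, so the call at iteration $t$ computes precisely the nodes $\{(s,\,t-s)\}$, i.e.\ the level set $g^{\star-1}(t)$, with the \textsc{PopLast} guard $i\ge L-1$ exactly releasing a segment once it has cleared all $L$ layers. I expect the only delicate points to be (i) phrasing the antichain/faithfulness step so that ``minimum number of groups'' for a schedule really coincides with the makespan of the DAG under unbounded parallel width, and (ii) the off-by-one bookkeeping in the \Cref{alg:fastgroupedarmt} correspondence (when a segment is first prepended versus when it enters \textit{Out}); neither is conceptually hard, but both are easy to state slightly wrong.
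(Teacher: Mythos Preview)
Your proof is correct and follows essentially the same strategy as the paper's: both argue via the longest-path/level structure of the grid DAG that $i+j$ is the earliest feasible group for node $(i,j)$ and that the diagonal schedule attains this lower bound with exactly $N_{\mathrm{segments}}+N_{\mathrm{layers}}-1$ groups. Your version is substantially more detailed than the paper's three-sentence sketch, and you additionally verify that \Cref{alg:fastgroupedarmt} actually realizes the diagonal schedule at each loop iteration, which the paper's proof does not spell out.
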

\begin{proof}
    Topologically sort the DAG by the key \((i,j)\) with root being \((0,0)\).  In this ordering, each node
    \((i,j)\) appears at level \(i+j\), which is therefore the earliest
    group it can occupy, and the longest path has length
    \(N_{\mathrm{segment}}+N_{\mathrm{layers}}-1\) vertices.  Hence, any schedule needs at
    least \(N_{\mathrm{segment}}+N_{\mathrm{layers}}-1\) groups.  Diagonal batching
    uses precisely those levels as its groups, achieving both bounds.
\end{proof}

\subsection{Implementation details}

To efficiently implement grouped layer computations, we modify the base model architecture. All layers are replaced with a single grouped layer, as shown in Figure~{\ref{fig:scheme_sequential_vs_grouped}}. Using the initial layer of the model as the basis, we implement the following adjustments:
\begin{enumerate}
\item Replace the linear layers with a \texttt{GroupedMatmul} operation. The weights and biases are constructed by stacking those from the original layers.
\item Layer normalization weights are also replaced by stacking parameters across all layers. Additionally, the forward pass is adapted to ensure correct broadcasting behavior.
\item All other operations remain unchanged. However, they operate as though they handle significantly larger batch sizes, contributing to parallel execution.
\end{enumerate}

For the grouped matrix multiplication, we utilize the \texttt{GroupedGEMM} function from the CUTLASS library with a minor optimization: the output tensor is pre-allocated as a single large tensor, which is subsequently partitioned into individual submatrices without additional overhead.

%%%%%%%%%%%%%%%%%%%%%%%%%%%%%%%%%%%%%%%%%%%%%%%%%%%%%%%%%%%%
% \newpage

%%%%%%%%%%%%%%%%%%%%%%%%%%%%%%%%%%%%%%%%%%%%%%%%%%%%%%%%%%%%

\section{Experiments}
\label{sec:experiments}

In experiment section, we address two main questions regarding diagonal batching method: 
\begin{itemize}
    \item How much speedup we can get compared to naive ARMT setup in single request inferences.
    \item How the proposed method compares with batching strategies.
\end{itemize}

We start from showing efficiency grows for individual bottleneck operations inside network - linear layers and attention. Then we show the resulting scaling for the transformer models with ARMT of different sizes. We conducted all experiments with the models from the Llama-3 family~\cite{grattafiori2024llama}.

\subsection{Linear layer efficiency}

The only change from base model is that we substitute linear layer with matrix multiplication to layers with grouped GEMM with group equal to all linear layers weights. 
% In Figure~{\ref{fig:gemm_vs_group_scaling_seq1024_hid2048_a100}}
In Figure~{\ref{fig:gemm_vs_group_over_matmul_scaling_seq1024_hid2048_a100}}
we show, that grouped GEMM FLOPS scales similar throw group size to GEMM with corresponding batch size. This gives the basis that our method should scale similar to underlying model with batch size as all other operations basically the same (but in different order).

Second, we have group size equal to the number of layers in the model. This way, we move grouped GEMM operation to peak GEMM flops for a100 and h100 GPUs, ensuring high utilization. Corresponding FLOPS improvement shown in~\Cref{fig:gemm_vs_group_over_matmul_scaling_seq1024_hid2048_a100}.

% \begin{figure}[h]
%     \centering
%     \begin{subfigure}[c]{0.5\textwidth}
%       \centering
%       \includegraphics[width=\textwidth]{graphs/linear_exps/gemm_vs_group_scaling_seq1024_hid2048_a100.pdf}
%       \caption{a100}
%     \end{subfigure}%
%     \centering
%     \begin{subfigure}[c]{0.5\textwidth}
%       \centering
%       \includegraphics[width=\textwidth]{graphs/linear_exps/gemm_vs_group_scaling_seq1024_hid2048_h100.pdf}
%       \caption{h100}
%     \end{subfigure}%

%     \caption{Performance Comparison: MatMul vs Grouped GEMM}
%     \label{fig:gemm_vs_group_scaling_seq1024_hid2048_a100}
% \end{figure}

\begin{figure}[h]
    \centering
    \begin{subfigure}[c]{0.5\textwidth}
      \centering
      \includegraphics[width=\textwidth]{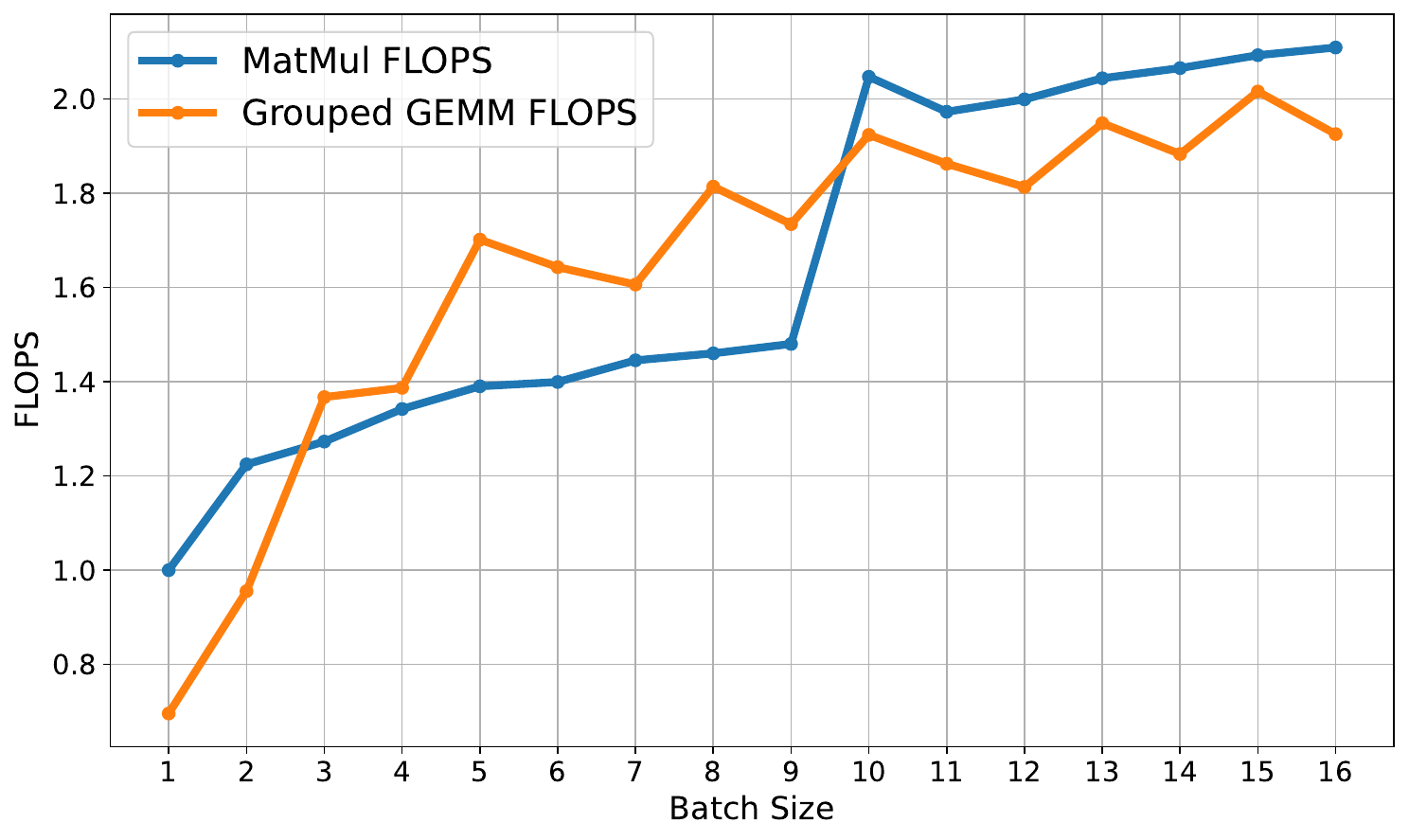}
      \caption{a100}
    \end{subfigure}%
    \centering
    \begin{subfigure}[c]{0.5\textwidth}
      \centering
      \includegraphics[width=\textwidth]{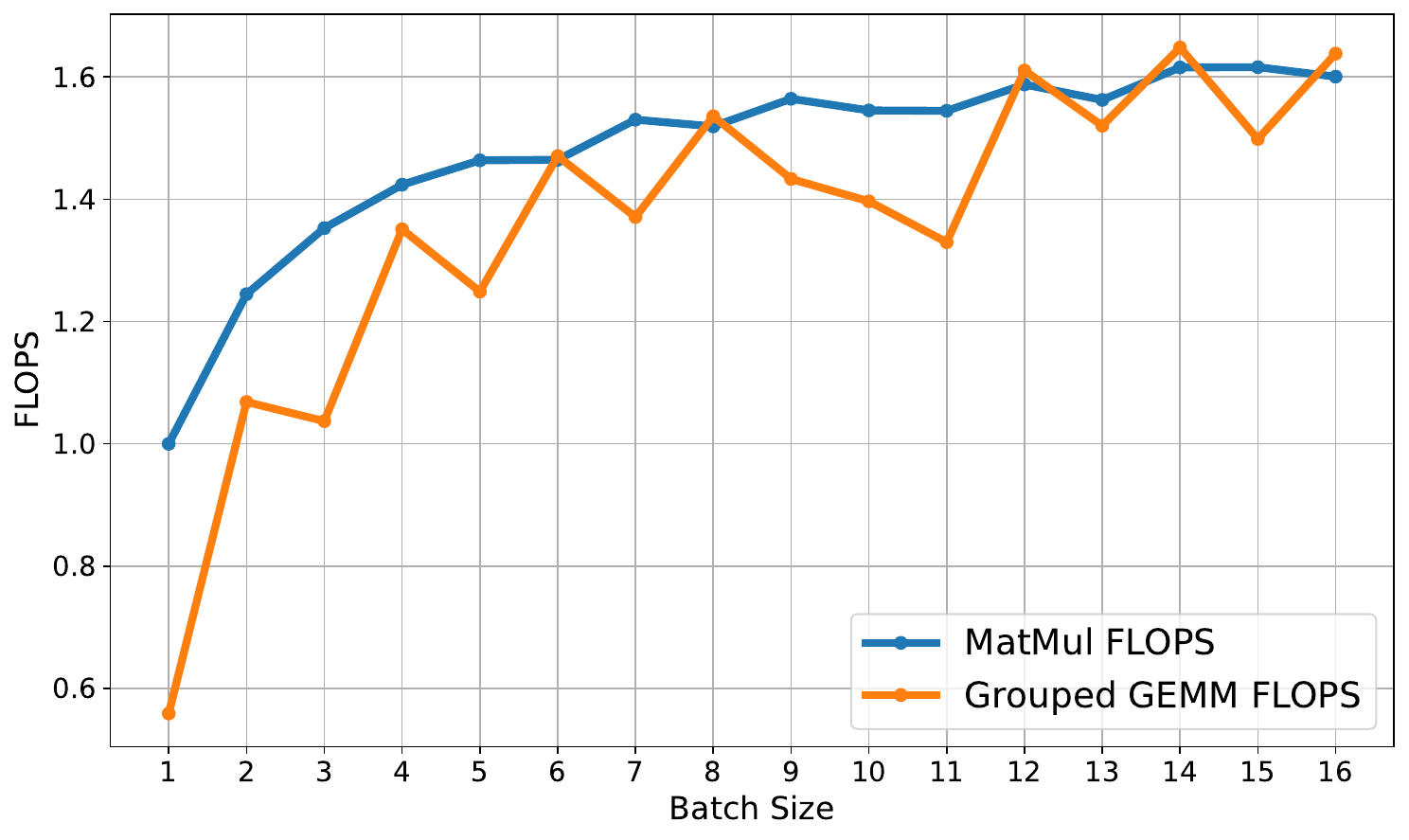}
      \caption{h100}
    \end{subfigure}%

    \caption{Cutlass Group GEMM scales similarly to batch size 1 Linear layer's matrix multiplication, starting from group size 4.}
    \label{fig:gemm_vs_group_over_matmul_scaling_seq1024_hid2048_a100}
\end{figure}

\subsection{Attention layer efficiency}

Our method does not modify attention layer at all. Instead, attention just performs batched operation with batch size equal to number of layers. This increase its performance to implementation FLOPS peak.
% We show absolute FLOPS measurements in \Cref{fig:attn_scaling_seq1024_hid2048_head128_a100} and corresponding relative speedup in \Cref{fig:attn_scaling_over_batch1_seq1024_hid2048_head128_a100}. 
We show relative FLOPS speedups in \Cref{fig:attn_scaling_over_batch1_seq1024_hid2048_head128_a100}. 

% \begin{figure}[h]
%     \centering
%     \begin{subfigure}[c]{0.5\textwidth}
%       \centering
%       \includegraphics[width=\textwidth]{graphs/attn_exps/attn_scaling_seq1024_hid2048_head128_a100.pdf}
%       \caption{a100}
%     \end{subfigure}%
%     \centering
%     \begin{subfigure}[c]{0.5\textwidth}
%       \centering
%       \includegraphics[width=\textwidth]{graphs/attn_exps/attn_scaling_seq1024_hid2048_head128_h100.pdf}
%       \caption{h100}
%     \end{subfigure}%

%     \caption{Performance scaling for different attention implementations}
%     \label{fig:attn_scaling_seq1024_hid2048_head128_a100}
% \end{figure}

\begin{figure}[h]
    \centering
    \begin{subfigure}[c]{0.45\textwidth}
      \centering
      \includegraphics[width=\textwidth]{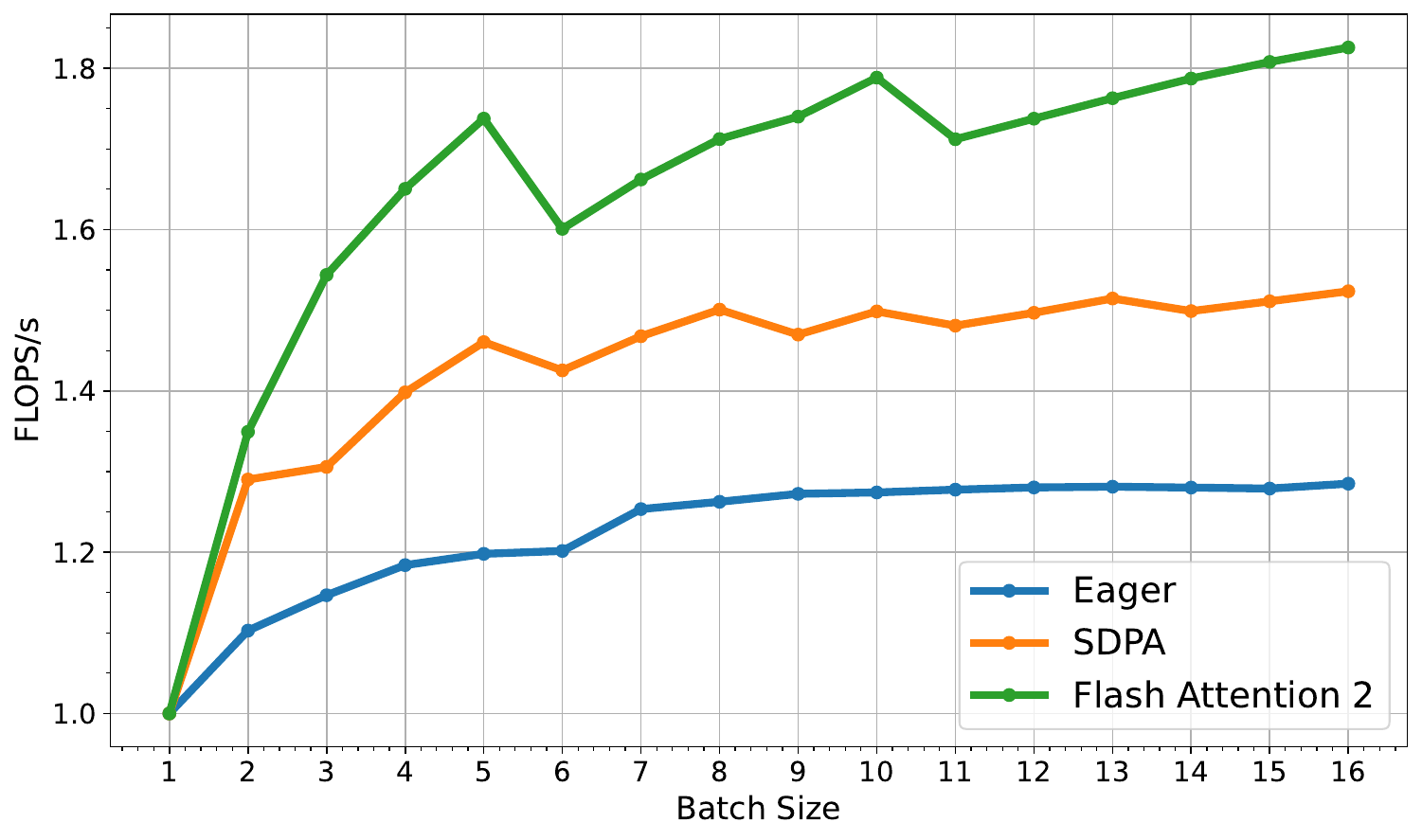}
      \caption{a100}
    \end{subfigure}%
    \centering
    \begin{subfigure}[c]{0.45\textwidth}
      \centering
      \includegraphics[width=\textwidth]{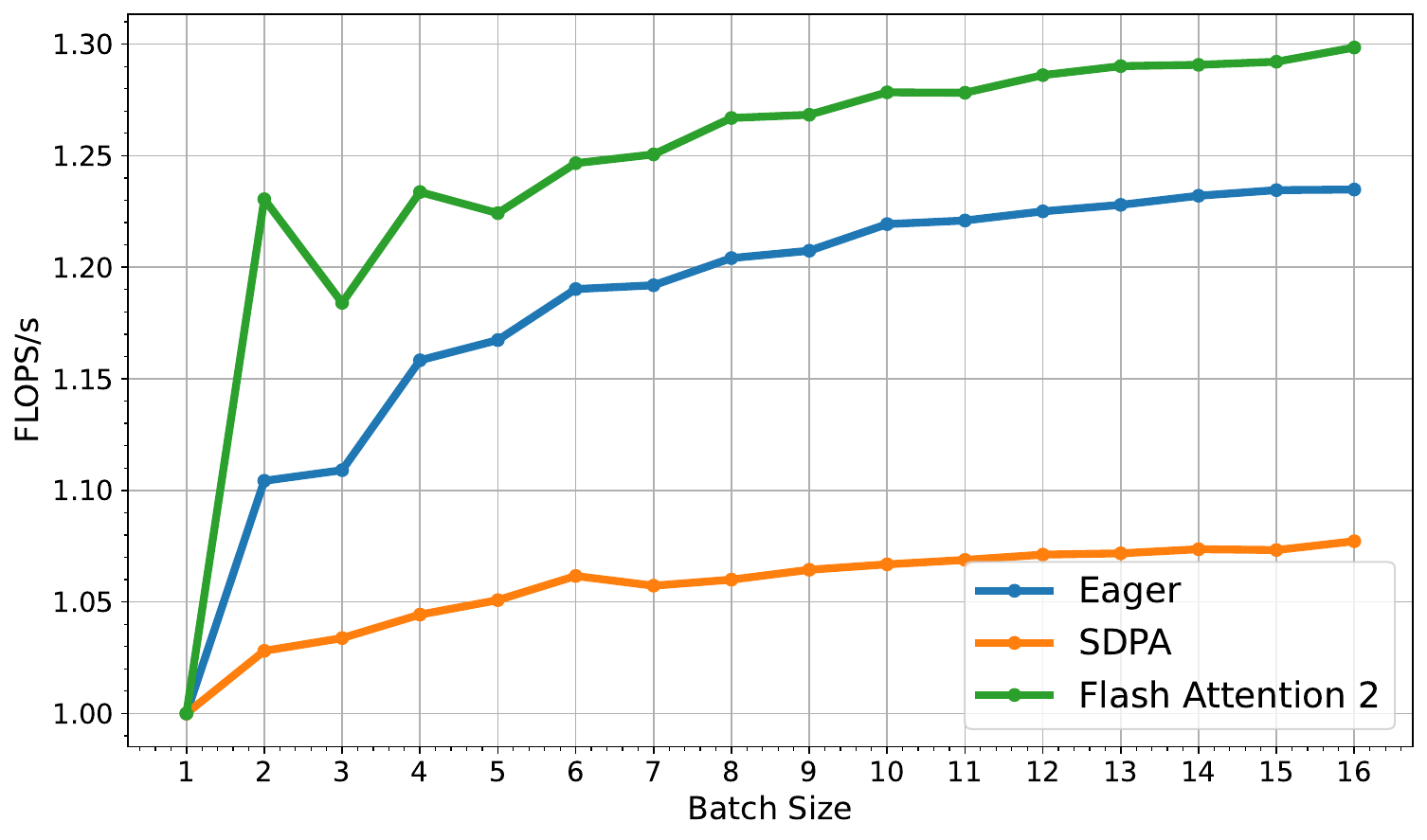}
      \caption{h100}
    \end{subfigure}%

    \caption{Diagonal batching increase attention performance by treating groups as batches—similar to increasing the model's overall batch size.}
    \label{fig:attn_scaling_over_batch1_seq1024_hid2048_head128_a100}
\end{figure}

\subsection{Inference scaling}

The performance increase for individual operations directly translates into overall model speedup. We evaluate this effect on Llama ARMT models of varying sizes—160M (\Cref{tab:perf_comparison_llama160m}), 1B (\Cref{tab:merged_perf_speedup_llama1b}), 3B (\Cref{tab:perf_comparison_llama3b}), and 8B (\Cref{tab:perf_comparison_llama8b}).

Across all model sizes and batch configurations, our implementation consistently achieves substantial speedups over the default ARMT implementation. Gains are particularly pronounced for smaller segment sizes. This is because, with larger matrix multiplications, hardware utilization is already near peak FLOPS, leaving less room for group scaling.

A key implication of these results is that researchers can prioritize quality-driven choices for segment size without being overly constrained by performance. Diagonal batching decouples performance from segment size, allowing better flexibility in architectural decisions.

\begin{table}[h]
  \centering
  \renewcommand{\arraystretch}{1.2}
  \resizebox{\textwidth}{!}{%
  \begin{tabular}{l*{6}{l}}
  \toprule
  \textbf{Method} & \multicolumn{6}{c}{\textbf{Sequence Length}} \\
  \cmidrule(lr){2-7}
   & {\textbf{4096}} & {\textbf{8192}} & {\textbf{16384}} & {\textbf{32768}} & {\textbf{65536}} & {\textbf{131072}} \\
  \midrule
  Llama-3.2-1B & 0.024 & 0.026 & 0.376 & 0.926 & 2.460 & 8.160 \\
  \rowcolor{gray!10} \textbf{Configuration: (512, 128)} \\
  LLama-3.2-1B-ARMT & 0.147 & 0.574 & 1.15 & 2.29 & 4.52 & 8.98 \\
  Diagonal Batching: LLama-3.2-1B-ARMT & 0.283 \textcolor{darkred}{x0.52} & 0.248 \textcolor{teal}{x2.32} & 0.454 \textcolor{teal}{x2.53} & 0.861 \textcolor{teal}{x2.66} & 1.67 \textcolor{teal}{x2.71} & 3.3 \textcolor{teal}{x2.72} \\
  \midrule
  \rowcolor{gray!10} \textbf{Configuration: (1024, 128)} \\
  LLama-3.2-1B-ARMT & 0.149 & 0.291 & 0.578 & 1.15 & 2.3 & 4.48 \\
  Diagonal Batching: LLama-3.2-1B-ARMT & 0.119 \textcolor{teal}{x1.25} & 0.196 \textcolor{teal}{x1.49} & 0.351 \textcolor{teal}{x1.65} & 0.656 \textcolor{teal}{x1.75} & 1.27 \textcolor{teal}{x1.81} & 2.48 \textcolor{teal}{x1.81} \\
  \midrule
  \rowcolor{gray!10} \textbf{Configuration: (2048, 128)} \\
  LLama-3.2-1B-ARMT & 0.094 & 0.177 & 0.344 & 0.679 & 1.35 & 2.68 \\
  Diagonal Batching: LLama-3.2-1B-ARMT & 0.108 \textcolor{darkred}{x0.87} & 0.176 \textcolor{teal}{x1.01} & 0.304 \textcolor{teal}{x1.13} & 0.571 \textcolor{teal}{x1.19} & 1.11 \textcolor{teal}{x1.22} & 2.18 \textcolor{teal}{x1.23} \\
  \midrule
  \rowcolor{gray!10} \textbf{Configuration: (4096, 128)} \\
  LLama-3.2-1B-ARMT & 0.082 & 0.155 & 0.301 & 0.594 & 1.18 & 2.35 \\
  Diagonal Batching: LLama-3.2-1B-ARMT & 0.102 \textcolor{darkred}{x0.80} & 0.172 \textcolor{darkred}{x0.90} & 0.295 \textcolor{teal}{x1.02} & 0.553 \textcolor{teal}{x1.07} & 1.07 \textcolor{teal}{x1.10} & 2.1 \textcolor{teal}{x1.12} \\
  \bottomrule
  \end{tabular}%
  }
  \caption{Diagonal Batching allows to speed-up the execution for longer sequences — from 1.1× to 2.7× compared to base ARMT at 131072 sequence length. Execution time comparison (in seconds) and relative speedups across different sequence lengths compared to LLama-3.2-1B-ARMT. Configuration format: (segment\_size, memory\_tokens). Measured on Nvidia A100 GPU.}
  \label{tab:merged_perf_speedup_llama1b}
\end{table}

\subsection{Diagonal batching vs mini-batching}

We evaluate the effectiveness of diagonal batching compared to standard mini-batching by measuring compute time per segment under identical hardware and model configurations. As shown in Figure~\ref{fig:batch_vs_group_llamas}, diagonal batching achieves compute scaling per segment that closely matches micro-batching across almost all tested scenarios.

To provide an upper bound on achievable performance, we also report the Ideal Even Load case, than all segments computations computed with full grouped layer with maximum achievable FLOPS. One can see this even load setup is much better, mostly matching or overcoming the biggest batch sizes. The gap between them is our current implementation inefficiency.

Notably, diagonal batching delivers substantial performance improvements for larger models (starting from 1B parameters), particularly when segment sizes are moderate. For these configurations, diagonal batching matches large batch sizes.

These findings suggest that diagonal batching effectively captures the utilization benefits of large-batch inference—through parallelized scheduling rather than increased memory allocation.

\begin{figure}[H]
        \centering
        \includegraphics[width=\textwidth]{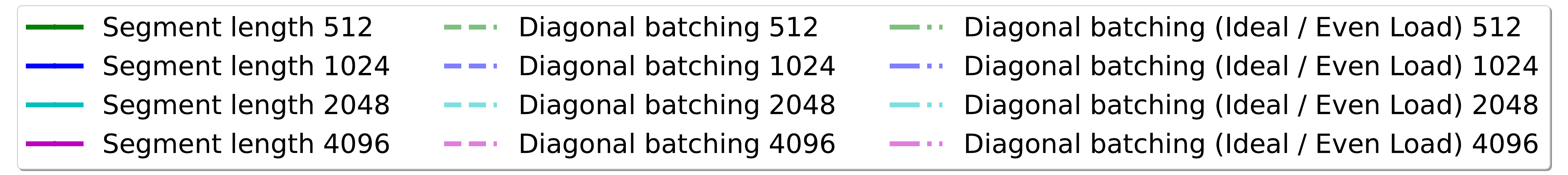}

        \centering
        \includegraphics[width=\textwidth]{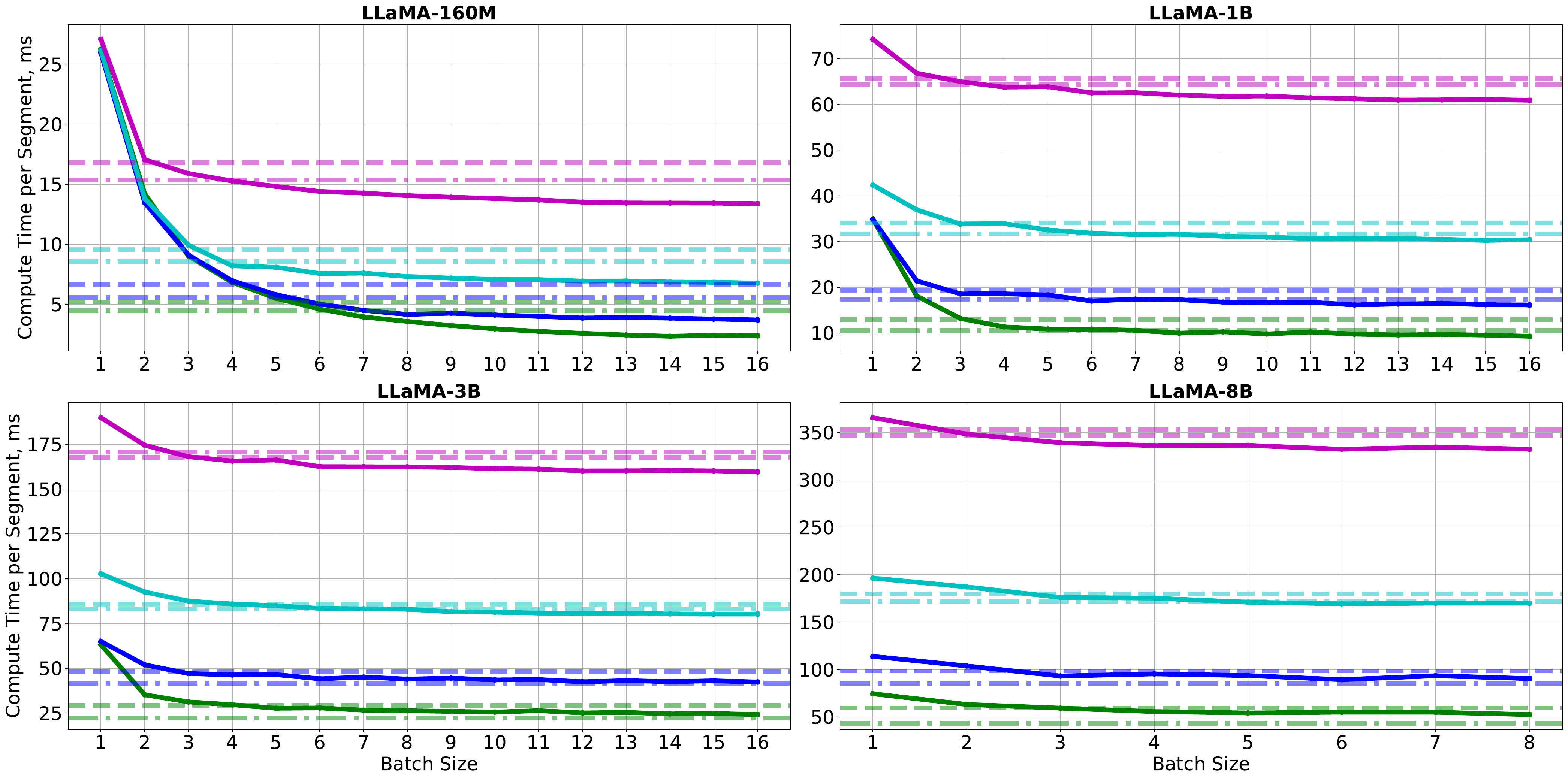}
    \caption{Ideal batch-size scaling vs grouped batching on Nvidia A100 for Llama models, time per segment in batch (group)}
    \label{fig:batch_vs_group_llamas}
\end{figure}

\subsection{Error accumulation}
We conducted an empirical investigation on error accumulation during inference stage with Diagonal Batching. Our experiments show that the overall error is less than 2\% for all sequences shorter than 32,768 tokens. This is comparable to other efficient layers implementations used in production. For example, we observed FlashAttention2~\citep{dao2023flashattention2} gives 1-2\% relative logits error compared to other attention implementations on same random input sequences. 

The detailed error values for each segment are presented in \Cref{tab:error_accumulation}. The error is calculated as the ratio of the Frobenius norm of difference between logits of base ARMT implementation and logits of ARMT with Diagonal Batching to the norm of logits of base ARMT. However, we the effect of error accumulation on downstream tasks is negligible. To prove this, we evaluated the trained ARMT model both in original implementation and with Diagonal Batching; the results are presented in~\Cref{tab:babilong_scores} in~Appendix~\ref{app:training_diag}. These results show that both implementations achieve the same results on the BABILong benchmark~\cite{babilong}, while \Cref{tab:babilong_time} in Appendix~\ref{app:training_diag} shows that diagonal batching can increase the relative speed by up to 3.2x for 64k-length token sequences.

%To solve this issue, we propose to train the models with the diagonal batching, the results are presented in Appendix~\ref{app:training_diag}.

\begin{table}[h]
\centering
\begin{tabular}{c||c|c|c|c|c|c}
\toprule
Number of segments & 1 & 2 & 4 & 8 & 16 & 32 \\
\midrule
Diagonal Batching, Error, \% & 0.00  & 1.10  & 1.49 & 1.75 & 1.89 & 1.87 \\
\midrule
FlashAttention2~\citep{dao2023flashattention2} vs torch SDPA, Error, \% & 1.25  & 1.15 & 1.17 & 1.22 & 1.36 & 1.45 \\
\bottomrule
\end{tabular}
\caption{\label{tab:error_accumulation} During inference with diagonal batching, error accumulates but does not exceed 2\%, which is comparable to the change of attention implementation (FlashAttention vs SDPA). The results for ARMT with Llama-3.2-1B-Instruct are shown with a segment size of 1024 tokens.}
\end{table}

%%%%%%%%%%%%%%%%%%%%%%%%%%%%%%%%%%%%%%%%%%%%%%%%%%%%%%%%%%%%
% \newpage
\section{Conclusion}
\label{sec:conclusion}

Long-context inference with transformer models still suffers from quadratic compute and linear memory growth. 
Several linear complexity architectures, such as Mamba, RWKV, and Recurrent Memory Transformers (RMTs), aim to address this. 
RMTs, in particular, offer the advantage of minimal architectural changes, ensuring compatibility with existing models and algorithms.
% In this paper, we address performance bottleneck of RMTs caused by intra-segment memory operations by introducing diagonal batching,  a grouping strategy that unlocks parallel-like performance for parallel memory RMT architectures.

This paper demonstrated that the principal bottleneck in both RMTs and their layer-memory variants (PRMTs) is not algorithmic complexity but scheduling: recurrent dependencies force fine-grained synchronization, that underutilizes modern accelerators. We introduced \emph{Diagonal Batching}, a simple but powerful scheduling scheme that reorganizes the layer--segment computation grid into concurrency-friendly diagonals, thereby enabling up to N\_Layers operations per kernel without altering exact recurrence. Our experiments demonstrate that a Llama-1B ARMT equipped with diagonal batching achieves 
a 3.3x latency decrease over the vanilla Llama-1B and a 1.8x speedup 
over a sequential RMT implementation on a 131,072 token context task, 
all while maintaining high exactness of resulting logits (with only a 1\% relative error).

% Considering these advantages—practical linear scaling, 
% compatibility with existing techniques, 
% and strong overall metric performance—we believe RMTs, 
% enhanced by diagonal batching, represent a highly competitive and practical approach 
% for real-world long-context applications.

Considering these advantages, Diagonal Batching turns theoretically appealing compute scaling of PRMTs into a practical solution for exact linear-time inference on extremely long contexts. By eliminating the major performance barrier, it positions memory-augmented recurrent Transformers as a competitive and scalable foundation for next-generation LLM applications that require efficient long-range input processing.

\section*{Limitations}
\label{sec:limitations}
Despite its advantages, Diagonal Batching has several practical limitations. First, it is not directly compatible with the Recurrent Memory Transformers (RMTs) due to intra-layer recurrence. However, a more promising approach is to focus on Parallel RMTs, which has already been shown in previous works to be more effective~\citep{rodkin2024associative}. Second, our current implementation assumes a uniform layer configuration. When models employ heterogeneous layers or varied hidden sizes, applying the technique requires more intricate grouping logic and manual engineering. Finally, the achievable speedup increases with the number of layers. Therefore, shallower models or models with very few layers will only see modest performance gains.

% Current limitations are:
% \begin{itemize}
%     \item Not compatible with recursive memory models. But parallel RMTs seems to be more promising.
%     % \item Needs memory fine-tuning for drop-in replacement.
%     \item Equal layer configuration. If model uses different layers and different hidden sizes, approach can be applied, but with more complex implementation and hand grouping layers.
%     % \item Limited speed-up for H100, better for A100. Depends on Ridge point.
% \end{itemize}

% TODO: check bibliography style and format
% \newpage
\bibliographystyle{plain}
\bibliography{bibliography}

%\section*{References}
%
%{
%\small
%
%
%[1] Alexander, J.A.\ \& Mozer, M.C.\ (1995) Template-based algorithms %for
%connectionist rule extraction. In G.\ Tesauro, D.S.\ Touretzky and T.K.%\ Leen
%(eds.), {\it Advances in Neural Information Processing Systems 7},
%pp.\ 609--616. Cambridge, MA: MIT Press.
%
%
%[2] Bower, J.M.\ \& Beeman, D.\ (1995) {\it The Book of GENESIS: %Exploring
%  Realistic Neural Models with the GEneral NEural SImulation System.}  %New York:
%TELOS/Springer--Verlag.
%
%
%[3] Hasselmo, M.E., Schnell, E.\ \& Barkai, E.\ (1995) Dynamics of %learning and
%recall at excitatory recurrent synapses and cholinergic modulation in %rat
%hippocampal region CA3. {\it Journal of Neuroscience} {\bf 15}%(7):5249-5262.
%}

%%%%%%%%%%%%%%%%%%%%%%%%%%%%%%%%%%%%%%%%%%%%%%%%%%%%%%%%%%%%
\newpage
\appendix

%\section{Technical Appendices and Supplementary Material}
%Technical appendices with additional results, figures, graphs and proofs may be submitted with the paper submission before the full submission deadline (see above), or as a separate PDF in the ZIP file below before the supplementary material deadline. There is no page limit for the technical appendices.

% % Do we need to keep it? Is it valid to compare no-armt model with armt in terms of scaling? Maybe it is good way of showing, what armt is practicall
% \input{tables/armt_diagonal_batching_vs_llama1b}

\section{Evaluating Models with Diagonal Batching}
\label{app:training_diag}

Although diagonal Batching significantly speeds up the inference, it also introduces some numerical drifts due to the optimized execution procedure. To estimate the effect of these drifts on practical tasks, we evaluated the ARMT model on BABILong benchmark~\cite{babilong} with and without diagonal Batching. The ARMT model was trained on the BABILong dataset with curriculum learning on length up to 8192 tokens, similar to the approach described in~\cite{babilong}. After, we evaluated this model with and without diagonal batching on QA1 and QA2 tasks from BABILong. Note that we did not change the weights of the model in this experiment; we simply applied the proposed Diagonal Batching grouping method.

The evaluation results are presented in~\Cref{tab:babilong_scores}. As one can see, despite the numerical drifts during forward pass, the generation results remain almost unchanged up to the 65536 input length. These results show that diagonal batching preserves the quality of the generation of trained ARMT model and can be used as drop-in replacement to speed-up the inference.
\begin{table}
\centering
  \resizebox{0.5\textwidth}{!}{%
    \begin{tabular}{l|l|c|c}
    \toprule
     \textbf{Task} & \begin{tabular}[c]{@{}c@{}}\textbf{Length,}\\ \textbf{tokens} \end{tabular} & \begin{tabular}[c]{@{}c@{}}\textbf{LLama-3.2-1B}\\ \textbf{ARMT}\\ \end{tabular}  & \begin{tabular}[c]{@{}c@{}}\textbf{LLama-3.2-1B}\\ \textbf{ARMT,}\\ \textbf{Diagonal Batching}\\ \end{tabular}\\
     \midrule
    \multirow[c]{8}{*}{QA1} & 0K & 100 & 100 \\
    &  1K & 100 & 100 \\
    &  2K & 100 & 100 \\
    &  4K & 100 & 100 \\
    &  8K & 100 & 100 \\
    &  16K & 100 & 100 \\
    &  32K & 100 & 100 \\
    &  64K & 70 & 69 \\
    \midrule
    \multirow[c]{8}{*}{QA2} & 0K & 100 & 100 \\
    & 1K & 100 & 100 \\
    & 2K & 100 & 100 \\
    & 4K & 100 & 100 \\
    & 8K & 99 & 100 \\
    & 16K & 98 & 98 \\
    & 32K & 94 & 94 \\
    & 64K & 47 & 46 \\
    %\multirow[c]{8}{*}{QA1} & 0K & \cellcolor[HTML]{00441b}{100} & \cellcolor[HTML]{00441b}{100} \\
    %&  1K & \cellcolor[HTML]{00441b}{100} & \cellcolor[HTML]{00441b}{100} \\
    %&  2K & \cellcolor[HTML]{00441b}{100} & \cellcolor[HTML]{00441b}{100} \\
    %&  4K & \cellcolor[HTML]{00441b}{100} & \cellcolor[HTML]{00441b}{100} \\
    %&  8K & \cellcolor[HTML]{00441b}{100} & \cellcolor[HTML]{00441b}{100} \\
    %&  16K & \cellcolor[HTML]{00441b}{100} & \cellcolor[HTML]{00441b}{100} \\
    %&  32K & \cellcolor[HTML]{00441b}{100} & \cellcolor[HTML]{00441b}{100} \\
    %&  64K & \cellcolor[HTML]{7dccb5}{70} & \cellcolor[HTML]{84cfb9}{69} \\
    %\midrule
    %\multirow[c]{8}{*}{QA2} & 0K & \cellcolor[HTML]{00441b}{100} & \cellcolor[HTML]{00441b}{100} \\
    %& 1K & \cellcolor[HTML]{00441b}{100} & \cellcolor[HTML]{00441b}{100} \\
    %& 2K & \cellcolor[HTML]{00441b}{100} & \cellcolor[HTML]{00441b}{100} \\
    %& 4K & \cellcolor[HTML]{00441b}{100} & \cellcolor[HTML]{00441b}{100} \\
    %& 8K & \cellcolor[HTML]{00491d}{99} & \cellcolor[HTML]{00441b}{100} \\
    %& 16K & \cellcolor[HTML]{005020}{98} & \cellcolor[HTML]{005020}{98} \\
    %& 32K & \cellcolor[HTML]{00682a}{94} & \cellcolor[HTML]{00682a}{94} \\
    %& 64K & \cellcolor[HTML]{f5fbfc}{47} & \cellcolor[HTML]{f7fcfd}{46} \\
    \bottomrule
    \end{tabular}%
  }
  \caption{Diagonal Batching maintains the same scores as the original ARMT inference method on the BABILong benchmark. Scores of the models were evaluated on the first two tasks: QA1 and QA2.}
  \label{tab:babilong_scores}
\end{table}

We also compared the inference time of these two approaches on the same benchmark. In this experiment, we measure not the forward pass time, but the generation time on the BABILong. \Cref{tab:babilong_time} shows that the diagonal batching approach significantly speeds up the generation, up to 3 times on the input length of 65536 tokens. During both of these experiments, we used the following ARMT configuration - the size of the segment was set to 1024 tokens, the number of memory tokens was set to 16 and the associative memory hidden size is 64.
\begin{table}
\centering
  \resizebox{0.6\textwidth}{!}{%
    \begin{tabular}{l|l|c|c|c}
    \toprule
     \textbf{Task} & \begin{tabular}[c]{@{}c@{}}\textbf{Length,}\\ \textbf{tokens} \end{tabular} & \begin{tabular}[c]{@{}c@{}}\textbf{LLama-3.2-1B,}\\ \textbf{ARMT}\\ \end{tabular}  & \begin{tabular}[c]{@{}c@{}}\textbf{LLama-3.2-1B,}\\ \textbf{ARMT,}\\ \textbf{Diagonal Batching}\\ \end{tabular} & \textbf{Speed-up ($\times$ times)} \\
     \midrule
    \multirow[c]{6}{*}{QA1} & 2K & 13.43 & 15.06 & 0.89 \\
    &  4K & 22.45 & 17.99 & 1.25 \\
    &  8K & 41.41 & 22.49 & 1.84 \\
    &  16K & 79.16 & 33.12 & 2.39 \\
    &  32K & 153.68 & 54.20 & 2.84 \\
    &  64K & 302.15 & 94.36 & 3.20 \\
    \midrule
    \multirow[c]{6}{*}{QA2} & 2K & 13.08 & 14.93 & 0.88 \\
    & 4K & 22.66 & 18.21 & 1.24 \\
    & 8K & 41.66 & 22.70 & 1.84 \\
    & 16K & 79.80 & 33.38 & 2.39 \\
    & 32K & 153.82 & 53.46 & 2.88 \\
    & 64K & 303.40 & 94.69 & 3.20 \\
    \bottomrule
    \end{tabular}%
  }
  \caption{Diagonal Batching significantly speeds up ARMT inference on longer inputs. Inference time (in seconds) and relative speed-up of the models are given on the BABILong dataset, first two tasks.}
  \label{tab:babilong_time}
\end{table}

% Finally, we implemented diagonal batching training which can be used to train the models from scratch to avoid training/inference code mismatch, which is a possible reason of numerical drifts seen at logits level. We hypothesize that this approach should eliminate errors arising from different computations in the original and grouped models.

Finally, we implemented backward pass for diagonal batching to support training. Aligning the training and inference code eliminates a discrepancy that is likely the source of logit-level floating-point drift.

\section{Additional measurements}

To clearly illustrate the speedup provided by the developed diagonal batching algorithm, we present relative improvements across various configurations and sequence lengths. Results for speedup against original ARMT implementation is shown in \Cref{tab:speedup_over_armt_llama1b} and against underlying Llama model in \Cref{tab:speedup_over_llama1b}. These measurements provide additional insights into how our method scales and compares to the baseline implementations.

We also present results for different size models of Llama-3 family~\cite{grattafiori2024llama}: LLaMA-160M (\Cref{tab:perf_comparison_llama160m}), 1B (\Cref{tab:merged_perf_speedup_llama1b}), 3B (\Cref{tab:perf_comparison_llama3b}), and 8B (\Cref{tab:perf_comparison_llama8b}) models.

\begin{table}[h]
  \centering
  \resizebox{\textwidth}{!}{%
  \renewcommand{\arraystretch}{1.2}
  \begin{tabular}{l*{6}{l}}
  \toprule
  \textbf{Method} & \multicolumn{6}{c}{\textbf{Sequence Length}} \\
  \cmidrule(lr){2-7}
   & {\textbf{4096}} & {\textbf{8192}} & {\textbf{16384}} & {\textbf{32768}} & {\textbf{65536}} & {\textbf{131072}} \\
  \midrule
  Llama-3.2-3B & 0.168 & 0.344 & 0.769 & 1.95 & 5.59 & 18.2 \\
  \rowcolor{gray!10} \textbf{Configuration: (1024, 128)} \\
  LLama-3.2-3B-ARMT & 0.272 & 0.537 & 1.05 & 2.02 & 4.09 & 8.23 \\
  Diagonal Batching: LLama-3.1-3B-ARMT & 0.274 \textcolor{darkred}{x0.99} & 0.454 \textcolor{teal}{x1.18} & 0.833 \textcolor{teal}{x1.26} & 1.58 \textcolor{teal}{x1.28} & 3.1 \textcolor{teal}{x1.32} & 6.14 \textcolor{teal}{x1.34} \\
  \rowcolor{gray!10} \textbf{Configuration: (4096, 128)} \\
  LLama-3.2-3B-ARMT & 0.203 & 0.39 & 0.765 & 1.52 & 3.01 & 6.01 \\
  Diagonal Batching: LLama-3.2-3B-ARMT & 0.239 \textcolor{darkred}{x0.85} & 0.411 \textcolor{darkred}{x0.95} & 0.739 \textcolor{teal}{x1.04} & 1.4 \textcolor{teal}{x1.09} & 2.72 \textcolor{teal}{x1.11} & 5.37 \textcolor{teal}{x1.12} \\
  \midrule
  \end{tabular}%
  }
  \caption{Diagonal batching speed-ups the execution - from 1.1 to 1.3 times comparing to base ARMT for 131072 sequence length.  Execution time comparison (in seconds) and relative speedups across different sequence lengths compared to LLama-3.2-3B-ARMT. Configuration in format (segment\_size, memory\_tokens). Nvidia A100 GPU.}
  \label{tab:perf_comparison_llama3b}
\end{table}
\begin{table}[h]
  \centering
  \renewcommand{\arraystretch}{1.2}
  \resizebox{\textwidth}{!}{%
  \begin{tabular}{l*{6}{l}}
  \toprule
  \textbf{Method} & \multicolumn{6}{c}{\textbf{Sequence Length}} \\
  \cmidrule(lr){2-7}
   & {\textbf{4096}} & {\textbf{8192}} & {\textbf{16384}} & {\textbf{32768}} & {\textbf{65536}} & {\textbf{131072}} \\
  \midrule
  Llama-3.1-8B & 0.332 & 0.682 & 1.48 & 3.61 & 9.82 & 30.4 \\
  \rowcolor{gray!10} \textbf{Configuration: (1024, 128)} \\
  LLama-3.1-8B-ARMT & 0.497 & 0.936 & 1.82 & 3.63 & 7.22 & 14.4 \\
  Diagonal Batching: LLama-3.1-8B-ARMT & 0.478 \textcolor{teal}{x1.04} & 0.86 \textcolor{teal}{x1.09} & 1.64 \textcolor{teal}{x1.11} & 3.2 \textcolor{teal}{x1.13} & 6.34 \textcolor{teal}{x1.14} & 12.6 \textcolor{teal}{x1.14} \\
  \rowcolor{gray!10} \textbf{Configuration: (4096, 128)} \\
  LLama-3.1-8B-ARMT& 0.384 & 0.754 & 1.48 & 2.95 & 5.86 & 11.7 \\
  Diagonal Batching: LLama-3.1-8B-ARMT & 0.432 \textcolor{darkred}{x0.89} & 0.781 \textcolor{darkred}{x0.97} & 1.46 \textcolor{teal}{x1.01} & 2.83 \textcolor{teal}{x1.04} & 5.6 \textcolor{teal}{x1.05} & 11.1 \textcolor{teal}{x1.05} \\
  \midrule
  \end{tabular}%
  }
  \caption{Diagonal batching speed-ups the execution - from 1.05 to 1.14 times comparing to base ARMT for 131072 sequence length. Execution time comparison (in seconds) and relative speedups across different sequence lengths compared to LLama-3.2-8B-ARMT. Configuration in format (segment\_size, memory\_tokens). Nvidia A100 GPU.}
  \label{tab:perf_comparison_llama8b}
\end{table}

\begin{table}[h]
  \centering
  \renewcommand{\arraystretch}{1.2}
  \resizebox{\textwidth}{!}{%
  \begin{tabular}{l*{6}{l}}
  \toprule
  \textbf{Method} & \multicolumn{6}{c}{\textbf{Sequence Length}} \\
  \cmidrule(lr){2-7}
   & {\textbf{4096}} & {\textbf{8192}} & {\textbf{16384}} & {\textbf{32768}} & {\textbf{65536}} & {\textbf{131072}} \\
  \midrule
  Llama-160M & 0.017 & 0.033 & 0.075 & 0.196 & 0.594 & 2.03 \\
  \rowcolor{gray!10} \textbf{Configuration: (1024, 128)} \\
  LLama-160M-ARMT & 0.105 & 0.211 & 0.422 & 0.877 & 1.72 & 3.37 \\
  Diagonal Batching: LLama-160M-ARMT & 0.061 \textcolor{teal}{x1.72} & 0.087 \textcolor{teal}{x2.43} & 0.138 \textcolor{teal}{x3.06} & 0.243 \textcolor{teal}{x3.61} & 0.451 \textcolor{teal}{x3.81} & 0.855 \textcolor{teal}{x3.94} \\
  \rowcolor{gray!10} \textbf{Configuration: (4096, 128)} \\
  LLama-160M-ARMT & 0.031 & 0.057 & 0.111 & 0.216 & 0.432 & 0.855 \\
  Diagonal Batching: LLama-160M-ARMT & 0.046 \textcolor{darkred}{x0.67} & 0.062 \textcolor{darkred}{x0.92} & 0.094 \textcolor{teal}{x1.18} & 0.156 \textcolor{teal}{x1.38} & 0.284 \textcolor{teal}{x1.52} & 0.537 \textcolor{teal}{x1.59} \\
  \midrule
  \end{tabular}%
  }
  \caption{Diagonal batching speed-ups the execution - from 1.6 to 3.9 times comparing to base ARMT for 131072 sequence length. Execution time comparison (in seconds) and relative speedups across different sequence lengths compared to LLama-160M-ARMT. Configuration in format (segment\_size, memory\_tokens). Nvidia A100 GPU.}
  \label{tab:perf_comparison_llama160m}
\end{table}

\begin{table}[H]
  \centering
  \renewcommand{\arraystretch}{1.2}
  \resizebox{\textwidth}{!}{%
  \begin{tabular}{l*{6}{S[table-format=3.3]}}
  \toprule
  \textbf{Method} & \multicolumn{6}{c}{\textbf{Sequence Length}} \\
  \cmidrule(lr){2-7}
   & {\textbf{4096}} & {\textbf{8192}} & {\textbf{16384}} & {\textbf{32768}} & {\textbf{65536}} & {\textbf{131072}} \\
  \midrule
  LLama-3.2-1B, configuration: (512, 128) & 0.085 & 0.105 & 0.828 & 1.075 & 1.473 & 2.473 \\
  \midrule
  LLama-3.2-1B, configuration: (1024, 128) & 0.202 & 0.133 & 1.071 & 1.412 & 1.937 & 3.290 \\
  \midrule
  LLama-3.2-1B, configuration: (2048, 128) & 0.222 & 0.148 & 1.237 & 1.622 & 2.216 & 3.743 \\
  \midrule
  LLama-3.2-1B, configuration: (4096, 128) & 0.235 & 0.151 & 1.275 & 1.675 & 2.299 & 3.886 \\
  \bottomrule
  \end{tabular}%
  }
  \caption{Diagonal batching ARMT implementation allows to speedup the execution for longer sequences due to  linear complexity - from 2.4 times to 3.8 times with respect to LLama-3.2-1B for 131072 sequence length. Table shows Diagonal Batching executor speedup against original LLama-3.2-1B for different methods across sequence lengths. Configuration in format (segment\_size, memory\_tokens). Measured on Nvidia A100 GPU.}
  \label{tab:speedup_over_llama1b}
\end{table}

\begin{table}[H]
  \centering
  \renewcommand{\arraystretch}{1.2}
  \resizebox{\textwidth}{!}{%
  \begin{tabular}{l*{6}{S[table-format=3.3]}}
  \toprule
  \textbf{Method} & \multicolumn{6}{c}{\textbf{Sequence Length}} \\
  \cmidrule(lr){2-7}
   & {\textbf{4096}} & {\textbf{8192}} & {\textbf{16384}} & {\textbf{32768}} & {\textbf{65536}} & {\textbf{131072}} \\
  \midrule
  LLama-3.2-1B, configuration: (512, 128) & 0.519 & 2.315 & 2.533 & 2.660 & 2.707 & 2.721 \\
  \midrule
  LLama-3.2-1B, configuration: (1024, 128) & 1.252 & 1.485 & 1.647 & 1.753 & 1.811 & 1.806 \\
  \midrule
  LLama-3.2-1B, configuration: (2048, 128) & 0.870 & 1.006 & 1.132 & 1.189 & 1.216 & 1.229 \\
  \midrule
  LLama-3.2-1B, configuration: (4096, 128) & 0.804 & 0.901 & 1.020 & 1.074 & 1.103 & 1.119 \\
  \bottomrule
  \end{tabular}%
  }
  \caption{Diagonal batching allows to speedup the execution for longer sequences - from 1.1 times to 2.7 times with respect to base ARMT for 131072 sequence length. In cases when diagonal batching is slower, we can fall back to the original inference algorithm at runtime. Table shows Diagonal Batching executor speedup against original ARMT inplementation for different methods across sequence lengths. Configuration in format (segment\_size, memory\_tokens). Measured on Nvidia A100 GPU.}
  \label{tab:speedup_over_armt_llama1b}
\end{table}

% Truncate the checklist
\end{document}